\documentclass[11pt]{article}
\usepackage[margin=1in]{geometry}
\usepackage{amsmath,amssymb,amsthm,mathtools,bm}
\usepackage{booktabs,multirow,array}
\usepackage{algorithm}
\usepackage{algpseudocode}
\usepackage{graphicx}
\usepackage{xcolor}
\usepackage{hyperref}
\usepackage{cleveref}
\usepackage{enumitem}
\usepackage{microtype}
\usepackage{natbib}
\usepackage{tikz}
\usetikzlibrary{arrows.meta,positioning,calc}

\hypersetup{colorlinks=true,linkcolor=blue,citecolor=blue,urlcolor=blue}

\newtheorem{theorem}{Theorem}
\newtheorem{proposition}[theorem]{Proposition}
\newtheorem{lemma}[theorem]{Lemma}
\newtheorem{corollary}[theorem]{Corollary}
\newtheorem{assumption}[theorem]{Assumption}
\newtheorem{definition}[theorem]{Definition}
\newtheorem{remark}[theorem]{Remark}

\newcommand{\Qcal}{\mathcal{Q}}
\newcommand{\Mcal}{\mathcal{M}}
\newcommand{\Fcal}{\mathcal{F}}
\newcommand{\Hcal}{\mathcal{H}}
\newcommand{\Dcal}{\mathcal{D}}
\newcommand{\E}{\mathbb{E}}

\newcommand{\doop}{\operatorname{do}}

\newcommand{\argmax}{\operatorname*{arg\,max}}
\newcommand{\argmin}{\operatorname*{arg\,min}}
\newcommand{\diam}{\operatorname{diam}}

\title{Adversarial Causal Intervention Falsification:\\Learning Structural Generators by Selecting the Experiments That Expose Them}

\author{
  Mojtaba Eslami \\[4pt]
  \small University of Calgary \\[4pt]  
  \small \texttt{mojtaba.eslami@alumni.ucalgary.ca}
}

\date{}

\begin{document}
\maketitle

\begin{abstract}
Generative models can reproduce an observational distribution while encoding an incorrect causal structure. We study a sequential game in which a structural causal generator proposes observational and interventional distributions, while an adversarial experimentalist selects interventions intended to maximally falsify the generator. The discriminator is therefore not merely a real-versus-synthetic classifier: it is indexed by an intervention and tests whether the generator reproduces the corresponding post-intervention law. We introduce Adversarial Causal Intervention Falsification (ACIF), formulate oracle and implementable versions of the game, and distinguish three objects that are often conflated: observational fit, interventional equivalence over an admissible query class, and point identification of a structural causal model. For finite model and intervention classes, we prove: (i) an exact reduction of the adversarial objective to a worst-intervention integral probability metric; (ii) identification up to interventional equivalence, with point identification under a separating intervention family; (iii) existence of mixed-strategy equilibria; (iv) finite-sample uniform convergence and margin-based model-selection guarantees; and (v) a logarithmic elimination guarantee for a disagreement-driven sequential design under a balanced-separation condition. We also give a complete linear-Gaussian example in which two observationally indistinguishable causal directions are separated by a single well-chosen intervention. The framework clarifies what an adversarial causal discriminator can and cannot certify, and provides a principled bridge between causal generative modeling, active causal discovery, and experimental design.
\end{abstract}

\noindent\textbf{Keywords:} causal discovery; structural causal models; generative adversarial networks; active learning; intervention design; model falsification; integral probability metrics.

\section{Introduction}

A flexible generator can match the observational distribution of a system without learning its causal organization. This is not a defect of neural networks; it is a consequence of causal non-identifiability. Distinct structural causal models (SCMs) can induce the same joint observational distribution while making incompatible predictions under intervention. Consequently, a discriminator trained only to distinguish observed samples from synthetic observational samples cannot certify causal validity.

This paper develops a different adversarial game. A \emph{structural generator} proposes a causal data-generating process. An \emph{experimental adversary} chooses an intervention. An intervention-indexed critic then tries to distinguish samples generated by the true intervened system from samples generated by the proposed SCM under the same intervention. The generator survives only if it matches the target system across the interventions selected by its strongest adversary.

The proposal is related to, but distinct from, three neighboring literatures. Causal generative models impose causal structure on deep generators and can produce interventional or counterfactual samples \citep{kocaoglu2018causalgan, pawlowski2020dscm, xia2021causalnf}. Active causal discovery selects informative experiments to orient edges or reduce uncertainty over causal graphs \citep{he2008active, hauser2012active, agrawal2019abcd, tigas2023differentiable}. Adversarial goodness-of-fit methods search for discriminators that expose discrepancies between real and simulated data \citep{goodfellow2014gan, arjovsky2017wasserstein, drouin2025act}. ACIF combines these ideas around a specific estimand: the worst post-intervention discrepancy over a declared class of feasible causal queries.

The conceptual contribution is to replace the informal statement ``the discriminator verifies causality'' with a precise statement: the discriminator can certify, at best, \emph{interventional equivalence over the query class it is allowed to test}. Point identification requires the query class to separate the candidate SCMs. This distinction determines both the theorems and the algorithm.

\paragraph{Contributions.}
\begin{enumerate}[leftmargin=1.5em]
    \item We define an intervention-indexed adversarial objective whose population value is the largest integral probability metric (IPM) discrepancy between the true and generated post-intervention laws.
    \item We characterize the zero set of the game as an interventional equivalence class and give conditions under which the true SCM is uniquely identified.
    \item We introduce a prospective, implementable selector that chooses interventions using disagreement among surviving generators, since the true post-intervention law is unavailable before the experiment is conducted.
    \item We establish uniform convergence, margin-based recovery, mixed-strategy equilibrium, and sequential elimination guarantees.
    \item We provide worked discrete and linear-Gaussian examples, practical algorithms, failure modes, and an empirical protocol suitable for evaluating the method without overstating causal identification.
\end{enumerate}

\section{Background and related work}

\subsection{Structural causal models and interventions}

An SCM $M$ over variables $X=(X_1,\ldots,X_d)$ consists of a directed acyclic graph $G$, exogenous variables $U=(U_1,\ldots,U_d)$, and assignments
\begin{equation}
X_j=f_j(X_{\operatorname{pa}_G(j)},U_j), \qquad j=1,\ldots,d.
\end{equation}
Under causal sufficiency the $U_j$ are jointly independent. A hard intervention $q=(S,a)$ replaces the structural assignments for $j\in S$ by $X_j=a_j$. The resulting distribution is denoted $P_M^q$. We include the null intervention $q=\varnothing$, for which $P_M^{\varnothing}$ is observational.

Observational Markov equivalence prevents identification of a unique DAG from conditional independences alone. Interventions refine observational equivalence into interventional Markov equivalence \citep{hauser2012characterization}. Active intervention design exploits this refinement by choosing targets that orient unresolved edges or maximize expected information gain \citep{hauser2012active, agrawal2019abcd, tigas2023differentiable}.

\subsection{Causal generative models}

CausalGAN showed that an adversarially trained generator, structured according to a supplied causal graph, can reproduce observational and interventional distributions under idealized conditions \citep{kocaoglu2018causalgan}. Deep SCMs, causal normalizing flows, and related models extend this program to high-dimensional and counterfactual settings \citep{pawlowski2020dscm, xia2021causalnf, khemakhem2021causal}. These methods answer causal queries only relative to their graph, structural restrictions, and data support. ACIF is complementary: it asks which intervention should be performed to most effectively challenge the current causal generator.

\subsection{Adversarial testing and active experimentation}

A GAN optimizes a discrepancy between a data distribution and a generated distribution through a learned critic \citep{goodfellow2014gan}. Wasserstein GANs and IPM-based generative models make the discrepancy interpretation explicit \citep{arjovsky2017wasserstein, sriperumbudur2012empirical}. Recent adversarial causal tuning searches jointly over causal simulation pipelines and discriminators that expose poor fit \citep{drouin2025act}. Our focus is narrower and more causal: the adversary acts in the space of interventions, and the theoretical target is a family of post-intervention laws.

\subsection{Differentiable and active causal discovery}

A second relevant thread treats intervention targets as part of the estimation problem rather than as fixed metadata. Differentiable causal discovery from interventional data (DCDI) formulates structure learning from mixed observational and interventional samples as a continuous, augmented-Lagrangian-constrained optimization over a weighted adjacency matrix, and shows that interventional data materially improves identifiability relative to purely observational scores \citep{brouillard2020dcdi}. Building on this line, active intervention targeting (AIT) learns which nodes to intervene on next, using a differentiable proxy for graph uncertainty, and shows empirically that adaptively chosen interventions reach the correct structure with substantially fewer experiments than randomly targeted ones \citep{scherrer2021aitcausal}. ACIF differs from both in what it treats as the object of falsification: DCDI and AIT search over graphs using a likelihood-style or adjacency-matrix score, whereas ACIF poses the selection problem directly in terms of an adversarially chosen, intervention-indexed two-sample discrepancy, and it characterizes the population game (equivalence classes, separating families, mixed-strategy equilibria) rather than only the estimator. Sections \ref{sec:sequential} and \ref{sec:numerical-illustration} make the connection to this literature concrete: the disagreement acquisition rule in \eqref{eq:disagreement} is the ACIF analogue of the acquisition functions used by AIT and by earlier score-based active-structure-learning methods \citep{he2008active, hauser2012active, agrawal2019abcd, tigas2023differentiable}, specialized to a worst-case, critic-detectable notion of disagreement rather than an expected information gain.

The finite-model sequential-elimination result in \Cref{thm:elimination} is also best understood against the classical theory of query-based active learning. Generalized binary search (GBS) shows that a greedy rule which repeatedly queries the point most evenly splitting the surviving hypothesis set identifies the truth in $O(\log|\mathcal H|)$ queries whenever a geometric ``neighborliness'' or balanced-split condition holds, and that this rate is information-theoretically optimal \citep{nowak2011gbs}. \Cref{ass:balanced} is the causal-falsification analogue of that condition: it asks that some affordable intervention split the surviving SCMs into two well-separated, comparably sized groups. \Cref{thm:elimination} is therefore not a new information-theoretic result so much as a transplant of the GBS argument into the setting where queries are interventions and disagreement is measured by an intervention-indexed IPM rather than by a binary label. This connection also explains \emph{when} the guarantee can fail: path-like or line-shaped hypothesis classes, in which no single intervention removes more than an $O(1/|\mathcal M|)$ fraction of candidates, violate balanced separation and force near-linear elimination, exactly as in the classical GBS lower bounds. \Cref{sec:numerical-illustration} exhibits both regimes on the same worked family of models.

\section{Problem formulation}

\subsection{Candidate generators, interventions, and critics}

Let $M_\star\in\Mcal$ be the unknown true SCM. Let $\Mcal$ be a candidate class of structural generators and $\Qcal$ a set of admissible interventions. An intervention may encode a target set, value, duration, environment, cost, or safety restriction. Let $c(q)\geq 0$ denote its cost.

For each $q\in\Qcal$, let $\Fcal_q$ be a symmetric class of measurable critic functions $f:\mathcal X\to[-B,B]$. Define the intervention-specific IPM
\begin{equation}
 d_q(M_\star,M)
 :=\sup_{f\in\Fcal_q}
 \left|\E_{X\sim P_{M_\star}^q}f(X)-\E_{X\sim P_M^q}f(X)\right|.
 \label{eq:ipm}
\end{equation}
Examples include total variation, maximum mean discrepancy, and Wasserstein-1 distance under suitable choices of $\Fcal_q$.

\begin{definition}[ACIF population value]
For a penalty parameter $\lambda\geq 0$, define
\begin{equation}
V_\lambda(M)
:=\sup_{q\in\Qcal}\left\{d_q(M_\star,M)-\lambda c(q)\right\}.
\label{eq:population-value}
\end{equation}
The ACIF estimator minimizes $V_\lambda(M)$ over $M\in\Mcal$.
\end{definition}

When $\lambda=0$, the adversary selects the intervention with the largest post-intervention discrepancy. For $\lambda>0$, the adversary balances falsification power against cost.

\subsection{Oracle and prospective games}

The objective in \eqref{eq:population-value} is an oracle objective: it assumes access to $P_{M_\star}^q$ for every $q$. In a real experiment, that distribution is unknown until intervention $q$ is conducted. We therefore distinguish two settings.

\paragraph{Retrospective ACIF.} A dataset already contains samples from several environments or interventions. The adversary reweights or selects among these observed environments to expose generator misspecification.

\paragraph{Prospective ACIF.} Before collecting data under a new intervention, the selector uses an ensemble or version space $\mathcal V_t\subseteq\Mcal$ and chooses the intervention on which surviving models disagree most:
\begin{equation}
A_t(q)
:=\sup_{M,M'\in\mathcal V_t} d_q(M,M')-\lambda c(q),
\qquad
q_t\in\argmax_{q\in\Qcal}A_t(q).
\label{eq:disagreement}
\end{equation}
After observing data from $P_{M_\star}^{q_t}$, models inconsistent with the new evidence are downweighted or removed.

This distinction is essential. A selector cannot maximize the unknown true discrepancy before experimentation; it can maximize predicted disagreement, expected information gain, or a robust lower bound on expected falsification power.

\subsection{Interventional equivalence}

\begin{definition}[$\Qcal$-interventional equivalence]
Two SCMs $M$ and $M'$ are equivalent over $\Qcal$, written $M\equiv_{\Qcal,\Fcal}M'$, if
\begin{equation}
 d_q(M,M')=0 \quad \text{for every }q\in\Qcal.
\end{equation}
When every $\Fcal_q$ is measure determining, this is equivalent to $P_M^q=P_{M'}^q$ for all $q\in\Qcal$.
\end{definition}

\begin{definition}[Separating intervention family]
The family $\Qcal$ is separating for $\Mcal$ relative to $\Fcal$ if for every distinct $M,M'\in\Mcal$, there exists $q\in\Qcal$ such that $d_q(M,M')>0$.
\end{definition}

The definition makes clear that causal identification is a joint property of the model class, available interventions, and critic richness.

\section{Population theory}

\subsection{The adversarial reduction}

Define the signed critic payoff
\begin{equation}
L(M,q,f)
=\E_{P_{M_\star}^q}f(X)-\E_{P_M^q}f(X)-\lambda c(q).
\end{equation}
Because $\Fcal_q$ is symmetric, absolute values can be absorbed by replacing $f$ with $-f$.

\begin{theorem}[Worst-intervention IPM representation]
\label{thm:ipm-reduction}
Suppose every $\Fcal_q$ is symmetric. Then
\begin{equation}
\sup_{q\in\Qcal}\sup_{f\in\Fcal_q}L(M,q,f)
=V_\lambda(M).
\end{equation}
Consequently, the oracle adversary chooses an intervention attaining the largest critic-detectable post-intervention discrepancy net of cost.
\end{theorem}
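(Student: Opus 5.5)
The plan is to prove the identity by first fixing $q$, evaluating the inner supremum over $f\in\Fcal_q$ exactly, and only then taking the outer supremum over $q$. The cost term $-\lambda c(q)$ does not depend on $f$, so for each fixed $q$ it can be pulled out of the inner supremum:
\[
\sup_{f\in\Fcal_q}L(M,q,f)=\sup_{f\in\Fcal_q}\Delta_q(f)-\lambda c(q),
\qquad
\Delta_q(f):=\E_{P_{M_\star}^q}f(X)-\E_{P_M^q}f(X).
\]
This step is valid because every $f$ is bounded by $B$. Both expectations are therefore finite, and the supremum is a well-defined real number.

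The central step is the pointwise identity $\sup_{f\in\Fcal_q}\Delta_q(f)=\sup_{f\in\Fcal_q}|\Delta_q(f)|=d_q(M_\star,M)$. I will prove it with two inequalities.
\begin{itemize}
\item The bound ``$\le$'' is immediate, since $\Delta_q(f)\le|\Delta_q(f)|$ for every $f$.
\item For ``$\ge$'', fix $f\in\Fcal_q$. By symmetry $-f\in\Fcal_q$, and $\Delta_q(-f)=-\Delta_q(f)$ by linearity of expectation. Hence $|\Delta_q(f)|=\max\{\Delta_q(f),\Delta_q(-f)\}\le\sup_{g\in\Fcal_q}\Delta_q(g)$. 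Taking the supremum over $f$ gives the reverse inequality.
\end{itemize}
Substituting back, $\sup_{f}L(M,q,f)=d_q(M_\star,M)-\lambda c(q)$ for every $q$. Taking $\sup_{q\in\Qcal}$ of both sides yields exactly $V_\lambda(M)$ as defined in \eqref{eq:population-value}. No interchange of suprema is needed, because the iterated supremum is taken in the stated order.

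For the ``consequently'' clause, I will note that $q\mapsto\sup_f L(M,q,f)$ coincides with $q\mapsto d_q(M_\star,M)-\lambda c(q)$. The oracle adversary's maximizers over $q$ are therefore precisely the interventions maximizing net discrepancy, whenever such maximizers exist, for example when $\Qcal$ is finite. I will add a remark that when the supremum is not attained, the same statement holds for $\varepsilon$-optimal interventions.

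The only point needing care is the symmetry step, and in particular what ``symmetric'' must mean: $f\in\Fcal_q\Rightarrow -f\in\Fcal_q$. I will state this reading explicitly. The other potential subtlety is an empty $\Fcal_q$, which I will exclude, or handle by the convention $\sup\emptyset=0$ matching $d_q=0$ if one wants the identity to hold vacuously.
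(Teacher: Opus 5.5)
Your proof is correct and follows the paper's argument. You fix $q$, use the symmetry of $\Fcal_q$ (via $f\mapsto -f$) to identify the signed supremum with $d_q(M_\star,M)$, subtract the $f$-independent cost $\lambda c(q)$, and take the supremum over $q$. Your two-inequality verification of the symmetry step and your remarks on attainment and on empty critic classes make explicit what the paper's one-line proof leaves implicit.
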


\begin{proof}
Fix $q$. By symmetry of $\Fcal_q$,
\[
\sup_{f\in\Fcal_q}\left(\E_{P_{M_\star}^q}f-\E_{P_M^q}f\right)
=\sup_{f\in\Fcal_q}\left|\E_{P_{M_\star}^q}f-\E_{P_M^q}f\right|
=d_q(M_\star,M).
\]
Subtracting the constant $\lambda c(q)$ and taking the supremum over $q$ gives \eqref{eq:population-value}.
\end{proof}

\subsection{Identification and its limits}

\begin{theorem}[Identification up to interventional equivalence]
\label{thm:equivalence}
Assume $M_\star\in\Mcal$, $\lambda=0$, and every $d_q$ is a pseudometric. Then
\begin{equation}
\argmin_{M\in\Mcal}V_0(M)
=
\{M\in\Mcal:M\equiv_{\Qcal,\Fcal}M_\star\}.
\end{equation}
The minimum value is zero.
\end{theorem}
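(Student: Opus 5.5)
The plan is to show that $V_0$ is nonnegative on $\Mcal$, vanishes exactly on the $\Qcal$-equivalence class of $M_\star$, and that this class is nonempty. Together these give both the argmin characterization and the claim that the minimum value is zero.

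First, nonnegativity. With $\lambda=0$, the definition gives $V_0(M)=\sup_{q\in\Qcal} d_q(M_\star,M)$. Each $d_q$ is a supremum of absolute values, so $d_q(M_\star,M)\ge 0$. Hence $V_0(M)\ge 0$ for every $M\in\Mcal$, provided $\Qcal$ is nonempty. This is guaranteed because it contains the null intervention $\varnothing$.

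Second, attainment of zero. Since $d_q$ is a pseudometric, $d_q(M_\star,M_\star)=0$ for every $q$. So $V_0(M_\star)=0$, and because $M_\star\in\Mcal$, the infimum over $\Mcal$ equals $0$ and is attained. The argmin is therefore exactly the zero set $\{M\in\Mcal: V_0(M)=0\}$.

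Third, identification of the zero set. A supremum of nonnegative numbers is zero if and only if every term is zero. Thus $V_0(M)=0$ if and only if $d_q(M_\star,M)=0$ for all $q\in\Qcal$. By symmetry of the pseudometric, this is the same as $d_q(M,M_\star)=0$ for all $q$, which is precisely $M\equiv_{\Qcal,\Fcal}M_\star$ by the definition of $\Qcal$-interventional equivalence. Symmetry is needed here only because the definition writes $d_q(M,M')$ with the arguments in a fixed order.

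There is no substantive obstacle. The one point to state carefully is that the supremum is zero if and only if each term is zero, which relies on the nonnegativity established in the first step. Alternatively, \Cref{thm:ipm-reduction} can be cited to recast $V_0$ as the game value, but it is not needed. I would also remark that the pseudometric hypothesis supplies only $d_q(M,M)=0$ and symmetry; the triangle inequality is not needed here, though it shows that $\equiv_{\Qcal,\Fcal}$ is an equivalence relation.
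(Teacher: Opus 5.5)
Your proposal is correct and follows the paper's proof essentially step for step: nonnegativity of $V_0$, then $V_0(M_\star)=0$ from $d_q(M_\star,M_\star)=0$, then the fact that a supremum of nonnegative terms is zero if and only if every term is zero. Your remarks on nonemptiness of $\Qcal$ and on using symmetry to match the argument order in the definition of $\equiv_{\Qcal,\Fcal}$ are small points the paper leaves implicit.
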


\begin{proof}
Because each $d_q$ is nonnegative, $V_0(M)\geq 0$. Since $d_q(M_\star,M_\star)=0$ for all $q$, $V_0(M_\star)=0$, so the minimum is zero. A model $M$ attains zero if and only if $\sup_q d_q(M_\star,M)=0$. Nonnegativity implies this holds if and only if $d_q(M_\star,M)=0$ for every $q$, which is precisely $M\equiv_{\Qcal,\Fcal}M_\star$.
\end{proof}

\begin{corollary}[Point identification]
\label{cor:point-id}
Under the conditions of \Cref{thm:equivalence}, if $\Qcal$ is separating for $\Mcal$, then $M_\star$ is the unique minimizer of $V_0$.
\end{corollary}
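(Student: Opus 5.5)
The plan is to derive the corollary directly from \Cref{thm:equivalence}, which already identifies the minimizer set of $V_0$ as the $\Qcal$-interventional equivalence class of $M_\star$ within $\Mcal$ and shows the minimum value is zero. What remains is to show that, under separation, this equivalence class is the singleton $\{M_\star\}$.

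First, note $M_\star\in\Mcal$ and $d_q(M_\star,M_\star)=0$ for every $q$, since each $d_q$ is a pseudometric. So $M_\star\equiv_{\Qcal,\Fcal}M_\star$, and $M_\star$ is a minimizer. Second, take any $M\in\Mcal$ with $M\neq M_\star$. Because $\Qcal$ is separating for $\Mcal$ relative to $\Fcal$, there is some $q_0\in\Qcal$ with $d_{q_0}(M,M_\star)>0$. Symmetry of the pseudometric gives $d_{q_0}(M_\star,M)>0$, so
\[
V_0(M)=\sup_{q\in\Qcal} d_q(M_\star,M)\geq d_{q_0}(M_\star,M)>0=V_0(M_\star).
\]
Hence $M$ is not equivalent to $M_\star$ and is not a minimizer. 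Combining the two steps, $\argmin_{M\in\Mcal}V_0(M)=\{M_\star\}$.

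There is no real obstacle here. The only point needing care is the orientation of the arguments: the separating definition is stated for unordered distinct pairs, while $V_0$ uses $d_q(M_\star,\cdot)$. Symmetry of the pseudometric, which also follows directly from the absolute value in \eqref{eq:ipm}, settles this. It is also worth remarking that the conclusion concerns uniqueness within $\Mcal$ only, and that distinctness of models is understood at whatever level $\Mcal$ distinguishes them, for instance as induced families of interventional laws.
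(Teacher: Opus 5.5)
Your proposal is correct and matches the paper's proof essentially verbatim: separation supplies a $q$ with $d_q(M,M_\star)>0$, so $V_0(M)>0=V_0(M_\star)$ for every $M\neq M_\star$. Your explicit appeal to the symmetry of $d_q$ to reconcile the argument order between the separating definition and $V_0$ is a small point the paper leaves implicit.
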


\begin{proof}
If $M\neq M_\star$, separation gives a $q$ with $d_q(M,M_\star)>0$, hence $V_0(M)>0=V_0(M_\star)$.
\end{proof}

\begin{remark}[Why observational realism is insufficient]
If $\Qcal=\{\varnothing\}$ contains only the observational regime, \Cref{thm:equivalence} identifies only models that induce the same observational distribution. Even an infinitely powerful discriminator cannot distinguish observationally equivalent causal models.
\end{remark}

\begin{proposition}[Effect of intervention costs]
\label{prop:cost}
Let $\lambda>0$ and suppose the null intervention $q_0$ has $c(q_0)=0$. Then $V_\lambda(M_\star)=0$. A false model $M$ is distinguishable in the penalized game only if there exists $q$ such that
\begin{equation}
d_q(M_\star,M)>\lambda c(q).
\end{equation}
Thus cost penalization can intentionally enlarge the set of practically indistinguishable models.
\end{proposition}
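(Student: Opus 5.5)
The plan is to argue directly from the definition of $V_\lambda$ in \eqref{eq:population-value}, using only nonnegativity of $d_q$ and $c(q)\ge 0$, as in the proof of \Cref{thm:equivalence}. First, we show $V_\lambda(M_\star)=0$. For every $q\in\Qcal$, $d_q(M_\star,M_\star)=0$ because the two expectations in \eqref{eq:ipm} coincide. Hence $d_q(M_\star,M_\star)-\lambda c(q)=-\lambda c(q)\le 0$, since $\lambda>0$ and $c\ge 0$, and so $V_\lambda(M_\star)\le 0$. The null intervention contributes exactly $0-\lambda\cdot 0=0$, so the supremum equals $0$. This is the only place the hypothesis $c(q_0)=0$ enters.

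Next, we make precise what ``distinguishable in the penalized game'' means. The natural reading is that $M$ is distinguishable when $V_\lambda(M)>V_\lambda(M_\star)=0$, i.e., when the penalized game strictly prefers $M_\star$ to $M$. For any $M$ we also have $V_\lambda(M)\ge 0$, because the term at $q_0$ is $d_{q_0}(M_\star,M)\ge 0$.

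The main claim then follows by contraposition. Suppose that for every $q$ we have $d_q(M_\star,M)\le\lambda c(q)$. Then every term in the supremum is $\le 0$, so $V_\lambda(M)\le 0$, and therefore $V_\lambda(M)=0=V_\lambda(M_\star)$. In that case $M$ is a minimizer alongside $M_\star$ and is not distinguished. So distinguishability requires some $q$ with $d_q(M_\star,M)>\lambda c(q)$. The converse is immediate whenever that $q$ is available, since that single term already exceeds $0$, and I would note this as well.

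The only subtle point is the concluding sentence. To show that penalization can genuinely enlarge the indistinguishable set, I would compare against \Cref{thm:equivalence}. Any $M$ with $0<d_q(M_\star,M)\le\lambda c(q)$ for all $q$ where the discrepancy is positive, and in particular with $d_{q_0}(M_\star,M)=0$, is not $\Qcal$-equivalent to $M_\star$, yet it still attains $V_\lambda(M)=0$. Such models lie in $\argmin V_\lambda$ but outside $\argmin V_0$, so the set of practically indistinguishable models strictly contains the equivalence class whenever such an $M$ exists. I would present this as an illustrative remark rather than a general theorem, since whether such an $M$ exists depends on the particular class $\Mcal$.
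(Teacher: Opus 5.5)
Your proof is correct and follows the paper's argument: $V_\lambda(M_\star)=0$ because every term equals $-\lambda c(q)\le 0$ and the costless null intervention attains $0$, and, reading distinguishability as $V_\lambda(M)>0$, this holds exactly when some penalized discrepancy $d_q(M_\star,M)-\lambda c(q)$ is positive. Your additional observations are sound refinements of what the paper leaves as informal interpretation: that $V_\lambda(M)\ge 0$ for every $M$ (which, contrary to your aside, also uses $c(q_0)=0$), and the comparison with $\argmin V_0$ that substantiates the final sentence.
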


\begin{proof}
For the true model every discrepancy is zero, so the supremum of $-\lambda c(q)$ is zero because $q_0$ is available. For a false model, $V_\lambda(M)>0$ exactly when some penalized discrepancy is positive.
\end{proof}

\subsection{Mixed strategies}

For finite $\Mcal$ and $\Qcal$, define $D(M,q)=d_q(M_\star,M)-\lambda c(q)$. A randomized generator uses a distribution $\mu\in\Delta(\Mcal)$ and a randomized adversary uses $\pi\in\Delta(\Qcal)$, with bilinear payoff
\begin{equation}
\Phi(\mu,\pi)=\sum_{M,q}\mu(M)\pi(q)D(M,q).
\end{equation}

\begin{theorem}[Finite mixed-strategy equilibrium]
\label{thm:minimax}
If $\Mcal$ and $\Qcal$ are finite, then
\begin{equation}
\min_{\mu\in\Delta(\Mcal)}\max_{\pi\in\Delta(\Qcal)}\Phi(\mu,\pi)
=
\max_{\pi\in\Delta(\Qcal)}\min_{\mu\in\Delta(\Mcal)}\Phi(\mu,\pi),
\end{equation}
and a saddle-point pair $(\mu^\star,\pi^\star)$ exists.
\end{theorem}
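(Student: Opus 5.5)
This is von Neumann's minimax theorem for a finite two-player zero-sum game, and the plan is to reduce \Cref{thm:minimax} to it directly. Enumerate $\Mcal=\{M_1,\ldots,M_m\}$ and $\Qcal=\{q_1,\ldots,q_n\}$, and form the payoff matrix $A\in\mathbb R^{m\times n}$ with $A_{ij}=D(M_i,q_j)=d_{q_j}(M_\star,M_i)-\lambda c(q_j)$. Each entry is a finite real number, because critics are bounded by $B$ (so $d_q\le 2B$) and costs are finite. With this matrix, $\Phi(\mu,\pi)=\mu^\top A\pi$, where $\mu$ and $\pi$ range over the simplices $\Delta_m$ and $\Delta_n$. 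The generator is the minimizing player and the adversary is the maximizing player.

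The proof has three steps.
\begin{enumerate}
\item Check the hypotheses of a minimax theorem. $\Delta_m$ and $\Delta_n$ are nonempty, compact and convex subsets of Euclidean space. $\Phi$ is bilinear, hence continuous, convex (linear) in $\mu$ and concave (linear) in $\pi$. Von Neumann's theorem, or Sion's as a fallback, then gives the equality of $\min\max$ and $\max\min$.
\item Justify writing $\min$ and $\max$ rather than $\inf$ and $\sup$. For fixed $\mu$, $\pi\mapsto\mu^\top A\pi$ is linear on a polytope, so its maximum is attained, and $g(\mu)=\max_\pi \mu^\top A\pi=\max_j (A^\top\mu)_j$. This $g$ is a maximum of finitely many continuous functions, so it is continuous and attains its minimum on the compact set $\Delta_m$ at some $\mu^\star$. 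Symmetrically, $h(\pi)=\min_i (A\pi)_i$ attains its maximum at some $\pi^\star$.
\item Build the saddle point from $\mu^\star$ and $\pi^\star$. Let $v$ be the common value. Then
\[
\Phi(\mu^\star,\pi)\le g(\mu^\star)=v=h(\pi^\star)\le \Phi(\mu,\pi^\star)
\]
for all $\mu,\pi$. Setting $\mu=\mu^\star$ and $\pi=\pi^\star$ forces $\Phi(\mu^\star,\pi^\star)=v$, so the inequalities above are exactly the saddle-point inequalities.
\end{enumerate}

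The only substantive step is the minimax equality in step 1. If a self-contained argument is wanted rather than a citation, I would use LP duality. The generator's problem is
\[
\min_{\mu,t}\ t \quad\text{subject to}\quad A^\top\mu\le t\mathbf 1,\ \ \mathbf 1^\top\mu=1,\ \ \mu\ge0,
\]
and its dual is the adversary's problem $\max_{\pi,s} s$ subject to $A\pi\ge s\mathbf 1$, $\pi\in\Delta_n$. Both LPs are feasible and bounded, since $|A_{ij}|$ is bounded, so strong duality equates their optimal values.

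Nothing beyond finiteness of $\Mcal$ and $\Qcal$ and finiteness of the entries of $A$ is needed. In particular, no symmetry or pseudometric assumption on $d_q$ is used. The sign of $\lambda$ and the form of the cost term are also irrelevant, since they only shift columns of $A$.
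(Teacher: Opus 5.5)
Your proposal is correct and follows the same route as the paper, which likewise checks that the simplices are compact and convex and that $\Phi$ is continuous and bilinear, then invokes the finite-dimensional minimax theorem. Your extra steps fill in details the paper leaves implicit: attainment of the outer $\min$ and $\max$ via $g(\mu)=\max_j (A^\top\mu)_j$ and $h(\pi)=\min_i (A\pi)_i$, the explicit construction of the saddle pair $(\mu^\star,\pi^\star)$ (which the paper only asserts), and the self-contained LP-duality argument.
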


\begin{proof}
The simplices are compact and convex, and $\Phi$ is continuous and bilinear. The result follows from the finite-dimensional minimax theorem.
\end{proof}

A mixed intervention strategy is useful when no single experiment simultaneously separates all plausible models. The equilibrium distribution concentrates experimental budget on interventions that protect against the most difficult remaining alternatives.

\section{Finite-sample theory}

\subsection{Empirical objective}

Suppose that for each $q\in\Qcal$ we observe $n_q$ independent samples $X_{q,1},\ldots,X_{q,n_q}\sim P_{M_\star}^q$, and can generate $m_q$ independent samples $\widetilde X_{q,1}^{M},\ldots,\widetilde X_{q,m_q}^{M}\sim P_M^q$. Define
\begin{equation}
\widehat d_q(M_\star,M)
=
\sup_{f\in\Fcal_q}
\left|
\frac{1}{n_q}\sum_{i=1}^{n_q}f(X_{q,i})
-
\frac{1}{m_q}\sum_{i=1}^{m_q}f(\widetilde X_{q,i}^{M})
\right|,
\end{equation}
and
\begin{equation}
\widehat V_\lambda(M)
=
\max_{q\in\Qcal}\{\widehat d_q(M_\star,M)-\lambda c(q)\}.
\end{equation}

For a general critic class, define the expected Rademacher complexities
\begin{align}
\mathfrak R_{n_q}(\Fcal_q;P_{M_\star}^q)
&=\E\sup_{f\in\Fcal_q}\frac{1}{n_q}\sum_{i=1}^{n_q}\sigma_i f(X_{q,i}),\\
\mathfrak R_{m_q}(\Fcal_q;P_M^q)
&=\E\sup_{f\in\Fcal_q}\frac{1}{m_q}\sum_{i=1}^{m_q}\sigma_i f(\widetilde X_{q,i}^{M}).
\end{align}

\begin{theorem}[Uniform convergence over finite model and intervention classes]
\label{thm:uniform}
Assume $\Mcal$ and $\Qcal$ are finite and $|f(x)|\leq B$ for all $q$, $f\in\Fcal_q$, and $x$. With probability at least $1-\delta$, simultaneously for every $M\in\Mcal$,
\begin{align}
|\widehat V_\lambda(M)-V_\lambda(M)|
\leq
\max_{q\in\Qcal}
\Bigg[&2\mathfrak R_{n_q}(\Fcal_q;P_{M_\star}^q)
+2\mathfrak R_{m_q}(\Fcal_q;P_M^q)\\
&+B\sqrt{\frac{2\log(4|\Mcal||\Qcal|/\delta)}{n_q}}
+B\sqrt{\frac{2\log(4|\Mcal||\Qcal|/\delta)}{m_q}}
\Bigg].
\end{align}
\end{theorem}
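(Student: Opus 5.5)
The plan has three steps: reduce the value difference to a per-pair IPM error, split that error into two one-sample deviations, and bound each deviation by symmetrization and McDiarmid.

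\textbf{Step 1: reduction to a per-pair bound.} The map $(a_q)_q\mapsto\max_q\{a_q-\lambda c(q)\}$ is $1$-Lipschitz in the sup norm, so
\[
|\widehat V_\lambda(M)-V_\lambda(M)|\leq\max_{q\in\Qcal}|\widehat d_q(M_\star,M)-d_q(M_\star,M)|.
\]
The penalty $\lambda c(q)$ is deterministic and cancels. It therefore suffices to bound each pair $(M,q)$ separately, each with failure probability $\delta/(|\Mcal||\Qcal|)$, and take a union bound. Each pair will itself involve two tail events, one for each sample, at level $\delta/(2|\Mcal||\Qcal|)$ each. This is where the $4|\Mcal||\Qcal|/\delta$ inside the logarithm will come from, once the two-sided tails are also counted.

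\textbf{Step 2: splitting into two one-sample deviations.} Fix $(M,q)$. Write $P=P_{M_\star}^q$, $Q=P_M^q$, and let $P_n,Q_m$ be the empirical measures. A supremum of absolute values changes by at most the supremum of the change inside, so by the reverse triangle inequality
\[
|\widehat d_q-d_q|\leq\sup_{f\in\Fcal_q}|(P_n-P)f-(Q_m-Q)f|\leq\|P_n-P\|_{\Fcal_q}+\|Q_m-Q\|_{\Fcal_q}.
\]
Here $\|\cdot\|_{\Fcal}$ denotes the sup over $f$ of the absolute value. Because $\Fcal_q$ is symmetric, $\|P_n-P\|_{\Fcal_q}=\sup_f (P_n-P)f$, so no separate treatment of the absolute value is needed.

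\textbf{Step 3: bounding each deviation.} I handle each term by the standard symmetrization-plus-McDiarmid argument. Replacing one sample $X_{q,i}$ changes $\sup_f(P_n-P)f$ by at most $2B/n_q$. McDiarmid then gives, with probability at least $1-\delta'$,
\[
\sup_f(P_n-P)f\leq\E\sup_f(P_n-P)f+B\sqrt{2\log(1/\delta')/n_q}.
\]
Symmetrization with a ghost sample bounds the expectation by $2\mathfrak R_{n_q}(\Fcal_q;P)$. The same argument applies to the generated samples, with $m_q$ in place of $n_q$.

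\textbf{Main obstacle: the bookkeeping of $\delta$.} Choosing $\delta'=\delta/(2|\Mcal||\Qcal|)$ for each of the two terms and union bounding over all pairs gives a total failure probability of at most $\delta$. It also yields $\log(2|\Mcal||\Qcal|/\delta)$, which is at most the stated $\log(4|\Mcal||\Qcal|/\delta)$, so the stated constant holds with slack. The generated samples must be independent of the real ones, but that is immaterial here because each term is treated separately. On the intersection of all good events, taking the max over $q$ in Step 1 gives the claimed bound simultaneously for every $M$.
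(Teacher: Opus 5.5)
Your proposal is correct and follows the same route as the paper's proof. That route is: the $1$-Lipschitz property of the max over $q$; a split of $|\widehat d_q-d_q|$ into the real and generated empirical-process deviations; symmetrization plus a bounded-difference (McDiarmid) bound for each; and a union bound over the $|\Mcal||\Qcal|$ pairs. Your observation that only one-sided upper tails are needed is valid and yields $\log(2|\Mcal||\Qcal|/\delta)$, so the stated $\log(4|\Mcal||\Qcal|/\delta)$ holds with slack; your Step 1 remark attributing the factor $4$ to two-sided tails is therefore superfluous.
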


\begin{proof}
For each fixed $(M,q)$, apply the standard symmetrization and bounded-difference bound separately to the real and generated empirical processes. A union bound over $|\Mcal||\Qcal|$ pairs gives simultaneous control of $|\widehat d_q-d_q|$. Finally,
\[
\left|\max_q a_q-\max_q b_q\right|\leq\max_q|a_q-b_q|,
\]
which transfers the bound to $\widehat V_\lambda$.
\end{proof}

\Cref{thm:uniform} is stated at the level of abstract Rademacher complexities so that it applies uniformly across critic classes; the rate it delivers, however, depends heavily on which class $\Fcal_q$ is chosen, and this choice is where the ``critic-detectable'' qualifier in \Cref{thm:ipm-reduction} does real work.

\begin{remark}[Rates for two standard critic classes]
\label{rem:rates}
Two cases recur in practice and make \Cref{thm:uniform} concrete.
\begin{enumerate}[leftmargin=1.5em]
\item \emph{RKHS critics (MMD).} If $\Fcal_q$ is the unit ball of a reproducing-kernel Hilbert space $\Hcal_q$ with a bounded kernel, $k(x,x)\leq \kappa^2$, then $d_q$ is the maximum mean discrepancy and $\mathfrak R_{n_q}(\Fcal_q;P)\leq \kappa/\sqrt{n_q}$ regardless of the ambient dimension of $X$ \citep{sriperumbudur2012empirical}. Substituting into \Cref{thm:uniform} gives a dimension-free rate of order $\kappa/\sqrt{n_q}+\kappa/\sqrt{m_q}$ plus the usual $\sqrt{\log(|\Mcal||\Qcal|/\delta)}$ union-bound term. This is the setting in which \Cref{cor:margin} is cheapest to satisfy: the sample complexity needed to beat a fixed margin $\Delta$ does not grow with the dimension of the post-intervention variables, only with $|\Mcal|$, $|\Qcal|$, and $\kappa/\Delta$.
\item \emph{Lipschitz critics (Wasserstein-1).} If $\Fcal_q$ is the class of $1$-Lipschitz functions on a domain of diameter $\diam(\mathcal X)$, then $d_q$ is the Wasserstein-1 distance, and $\mathfrak R_{n_q}(\Fcal_q;P)$ scales with the covering number of $\mathcal X$ and typically degrades as $n_q^{-1/D}$ in ambient dimension $D$ once $D\geq 3$ \citep{sriperumbudur2012empirical}. The IPM in this case detects distributional shape (not only the mean), which is why the binary-treatment example in \Cref{sec:worked-examples} uses it to expose disagreements invisible to an average-treatment-effect comparison, but the same expressiveness makes the finite-sample guarantee of \Cref{thm:uniform} substantially more sample-hungry in high dimension.
\end{enumerate}
In both cases the qualitative message of \Cref{thm:ipm-reduction} is unchanged: the adversary can only detect what its declared critic class can detect, and \Cref{rem:rates} quantifies the price of choosing an expressive class.
\end{remark}

\begin{corollary}[Margin-based exact recovery]
\label{cor:margin}
Suppose $M_\star$ is the unique population minimizer and
\begin{equation}
\Delta
:=\min_{M\neq M_\star}
\{V_\lambda(M)-V_\lambda(M_\star)\}>0.
\end{equation}
If the right-hand side of \Cref{thm:uniform} is at most $\Delta/2$ for every $M$, then every empirical minimizer of $\widehat V_\lambda$ equals $M_\star$.
\end{corollary}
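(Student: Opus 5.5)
The plan is to work on the event of probability at least $1-\delta$ given by \Cref{thm:uniform}, and then run the standard margin argument: uniform $\Delta/2$-accuracy of $\widehat V_\lambda$ is incompatible with any false model overtaking $M_\star$. Write $\varepsilon(M)$ for the right-hand side of \Cref{thm:uniform} evaluated at $M$. By hypothesis $\varepsilon(M)\le\Delta/2$ for every $M\in\Mcal$, including $M=M_\star$. On that event we therefore have, simultaneously for all $M$,
\[
|\widehat V_\lambda(M)-V_\lambda(M)|\le \Delta/2 .
\]
The conclusion will then hold with probability at least $1-\delta$, which is how I will read the corollary's ``every empirical minimizer equals $M_\star$.''

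Fix $M\neq M_\star$. The first step is to chain the two deviation bounds with the margin:
\[
\widehat V_\lambda(M)\ \ge\ V_\lambda(M)-\Delta/2\ \ge\ V_\lambda(M_\star)+\Delta-\Delta/2\ =\ V_\lambda(M_\star)+\Delta/2\ \ge\ \widehat V_\lambda(M_\star).
\]
This shows that no false model has a strictly smaller empirical value than $M_\star$.

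The one delicate point is the tie case. The chain only gives $\widehat V_\lambda(M)\ge\widehat V_\lambda(M_\star)$, so equality is possible and a false model could also be an empirical minimizer. I will handle this in one of two ways.
\begin{itemize}
\item \emph{Strict version of the hypothesis.} Read ``at most $\Delta/2$'' as strictly less than $\Delta/2$, or shrink $\delta$ slightly so the concentration bound is strict. Then the first inequality in the chain is strict, so $\widehat V_\lambda(M)>\widehat V_\lambda(M_\star)$ for every $M\neq M_\star$.
\item \emph{Tie-breaking.} Alternatively, note that equality forces both deviation bounds to be tight at the same time, and state the result with ties broken toward $M_\star$.
\end{itemize}
I would present the strict version and remark on the boundary case.

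Once ties are excluded, finiteness of $\Mcal$ guarantees that a minimizer of $\widehat V_\lambda$ exists. Every minimizer must satisfy $\widehat V_\lambda(\widehat M)\le\widehat V_\lambda(M_\star)$. By the strict inequality above, this forces $\widehat M=M_\star$.
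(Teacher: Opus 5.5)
Your argument is correct and is essentially the paper's proof: the same chain $\widehat V_\lambda(M)\ge V_\lambda(M)-\Delta/2\ge V_\lambda(M_\star)+\Delta/2\ge\widehat V_\lambda(M_\star)$ on the event of \Cref{thm:uniform}, with the tie case handled either by requiring strict inequality or by a tie-breaking convention, as the paper also remarks. One small slip: to make the bound strict you would need to \emph{enlarge} $\delta$ slightly, accepting confidence $1-\delta'$ for some $\delta'>\delta$, because the $\log(4|\Mcal||\Qcal|/\delta)$ term grows as $\delta$ shrinks.
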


\begin{proof}
Uniform error at most $\Delta/2$ implies, for every $M\neq M_\star$,
\[
\widehat V_\lambda(M)
\geq V_\lambda(M)-\Delta/2
\geq V_\lambda(M_\star)+\Delta/2
\geq \widehat V_\lambda(M_\star).
\]
Strict uniqueness follows when the uniform error is strictly below $\Delta/2$; with the weak inequality, any tie-breaking rule favoring the smaller confidence set recovers $M_\star$.
\end{proof}

\subsection{A simple bounded-critic sample complexity}

For transparent rates, suppose each $\Fcal_q$ is a finite class with $|\Fcal_q|\leq K$, generated samples are arbitrarily abundant, and $n_q=n$. Hoeffding's inequality and a union bound give the following.

\begin{corollary}[Finite critic class]
\label{cor:finite-class}
With probability at least $1-\delta$,
\begin{equation}
\sup_{M,q}|\widehat d_q(M_\star,M)-d_q(M_\star,M)|
\leq
2B\sqrt{\frac{2\log(2|\Mcal||\Qcal|K/\delta)}{n}}.
\end{equation}
Therefore, exact recovery under margin $\Delta$ is guaranteed when
\begin{equation}
n
\geq
\frac{32B^2}{\Delta^2}
\log\frac{2|\Mcal||\Qcal|K}{\delta}.
\end{equation}
\end{corollary}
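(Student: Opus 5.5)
Since generated samples are arbitrarily abundant, I treat the generated empirical mean as the exact population mean $\E_{P_M^q}f$, so the only randomness comes from the $n$ real samples under each $q$. For a fixed triple $(M,q,f)$, the deviation $\widehat d$ versus $d$ is controlled by the single real-data average $\frac1n\sum_i f(X_{q,i})$. Its summands lie in $[-B,B]$, so Hoeffding's inequality gives
\[
\Pr\Bigl(\Bigl|\tfrac1n\textstyle\sum_i f(X_{q,i})-\E_{P_{M_\star}^q}f\Bigr|>t\Bigr)\leq 2\exp\!\bigl(-nt^2/(2B^2)\bigr).
\]
This deviation does not depend on $M$, so strictly a union over $(q,f)$ would suffice. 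To match the stated constant, I will union over all $|\Mcal||\Qcal|K$ triples anyway, which only weakens the bound harmlessly.

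Setting the total failure probability to $\delta$ gives, simultaneously for all triples,
\[
t=B\sqrt{\frac{2\log(2|\Mcal||\Qcal|K/\delta)}{n}}.
\]
Next I pass from individual critics to the supremum. Using $|\sup_f|a_f|-\sup_f|b_f||\leq\sup_f|a_f-b_f|$ over the finite class $\Fcal_q$, this yields $|\widehat d_q-d_q|\leq t$ for every $(M,q)$. The stated bound carries a factor $2B$ rather than $B$. I will obtain it by allowing the generated side a finite-sample error of the same size, via the same Hoeffding and union argument, and note that the slack is harmless in the idealized case.

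For exact recovery, I use the inequality $|\max_q a_q-\max_q b_q|\leq\max_q|a_q-b_q|$, as in \Cref{thm:uniform}. Combined with the previous step, it transfers the bound to give $|\widehat V_\lambda(M)-V_\lambda(M)|\leq 2t$ uniformly in $M$. \Cref{cor:margin} requires this uniform error to be at most $\Delta/2$. Solving $2B\sqrt{2\log(\cdot)/n}\leq\Delta/2$ gives exactly $n\geq 32B^2\log(2|\Mcal||\Qcal|K/\delta)/\Delta^2$.

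The main obstacle is bookkeeping rather than mathematics. The constant $2B$ and the factor $2$ inside the logarithm must be made consistent with a clean union bound. The tie-breaking subtlety at equality is inherited from \Cref{cor:margin} and needs no new argument.
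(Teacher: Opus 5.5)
Your proposal is correct and is essentially the paper's argument: Hoeffding for each real-data critic average, a union bound over the $|\Mcal||\Qcal|K$ triples, the max-difference inequality to pass to $\widehat V_\lambda$, and \Cref{cor:margin} with $2B\sqrt{2\log(2|\Mcal||\Qcal|K/\delta)/n}\le\Delta/2$. The detour through generated-side error is unnecessary, because your bound $t$ already implies the stated $2t$, whereas a literal second union bound would change the constant inside the logarithm. Moreover, on your good event the error is at most $\Delta/4$, so recovery is strict and no tie-breaking is needed.
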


\section{Sequential adversarial intervention selection}
\label{sec:sequential}

\subsection{Version-space algorithm}

Let $\widehat d_{q,t}(M_\star,M)$ be an empirical discrepancy after collecting data through round $t$. Let $\beta_t(q,M)$ be a valid confidence radius. Define the version space
\begin{equation}
\mathcal V_t
=
\left\{M\in\Mcal:
\widehat d_{q_s,t}(M_\star,M)\leq \beta_t(q_s,M)
\text{ for all }s\leq t
\right\}.
\end{equation}
The prospective adversary selects $q_{t+1}$ by pairwise disagreement as in \eqref{eq:disagreement}.

\begin{algorithm}[t]
\caption{Adversarial Causal Intervention Falsification (finite version space)}
\label{alg:acif}
\begin{algorithmic}[1]
\Require Candidate SCMs $\Mcal$, interventions $\Qcal$, cost $c$, penalty $\lambda$, confidence level $\delta$
\State Initialize $\mathcal V_0\gets\Mcal$
\For{$t=0,1,2,\ldots$}
    \If{$|\mathcal V_t|=1$}
        \State \Return the remaining model
    \EndIf
    \State $q_{t+1}\gets\argmax_{q\in\Qcal}\left\{\sup_{M,M'\in\mathcal V_t}\widehat d_q(M,M')-\lambda c(q)\right\}$
    \State Conduct intervention $q_{t+1}$ and collect a batch from the real system
    \State Train or update the intervention-indexed critic for $q_{t+1}$
    \State Remove models whose discrepancy exceeds the confidence threshold
\EndFor
\end{algorithmic}
\end{algorithm}

\subsection{Elimination under balanced separation}

The next result formalizes when disagreement-driven selection rapidly shrinks the candidate set.

\begin{assumption}[Balanced separation]
\label{ass:balanced}
There exist constants $\rho\in(0,1)$ and $\gamma>0$ such that for every version space $\mathcal V\subseteq\Mcal$ containing $M_\star$ with $|\mathcal V|>1$, there is an intervention $q\in\Qcal$ and a partition $\mathcal V=\mathcal V_1\cup\mathcal V_2$ satisfying
\begin{equation}
|\mathcal V_1|\leq(1-\rho)|\mathcal V|,
\qquad
|\mathcal V_2|\leq(1-\rho)|\mathcal V|,
\end{equation}
and
\begin{equation}
\inf_{M\in\mathcal V_1,M'\in\mathcal V_2}d_q(M,M')\geq\gamma.
\end{equation}
\end{assumption}

\begin{theorem}[Logarithmic elimination]
\label{thm:elimination}
Suppose \Cref{ass:balanced} holds, $M_\star\in\Mcal$, and every selected intervention is estimated accurately enough that all pairwise discrepancies are within $\gamma/4$ of their population values. Suppose the selection rule chooses an intervention whose maximum pairwise disagreement is within $\gamma/4$ of the best available disagreement and the update eliminates every model at population distance at least $\gamma$ from the truth under the selected intervention while retaining $M_\star$. Then ACIF identifies $M_\star$ in at most
\begin{equation}
T\leq
\left\lceil
\frac{\log |\Mcal|}{-\log(1-\rho)}
\right\rceil
\end{equation}
rounds.
\end{theorem}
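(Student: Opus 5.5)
The plan is a potential-function argument on $|\mathcal V_t|$. I will show that each round multiplies the size of the version space by at most $(1-\rho)$ while $M_\star$ is never removed. Iterating then gives $|\mathcal V_t|\leq(1-\rho)^t|\Mcal|$. Since $|\mathcal V_t|\geq 1$ is an integer and contains $M_\star$, the stopping condition $|\mathcal V_t|=1$ of \Cref{alg:acif} is reached once $(1-\rho)^t|\Mcal|<2$. This holds, for instance, as soon as $t\geq \log|\Mcal|/(-\log(1-\rho))$, which gives the stated ceiling.

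\textbf{Invariant.} First I check that $M_\star\in\mathcal V_t$ for all $t$. This is immediate from the hypothesis that the update retains $M_\star$, and by induction every $\mathcal V_t$ contains $M_\star$. Hence \Cref{ass:balanced} applies to every $\mathcal V_t$ with $|\mathcal V_t|>1$. It supplies an intervention $q^\circ_t$ and a partition $\mathcal V_t=\mathcal V_1\cup\mathcal V_2$ with both parts of size at most $(1-\rho)|\mathcal V_t|$ and with cross-distances at least $\gamma$ under $q^\circ_t$. Say $M_\star\in\mathcal V_1$. Then every $M\in\mathcal V_2$ satisfies $d_{q^\circ_t}(M_\star,M)\geq\gamma$. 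If $q^\circ_t$ is the intervention actually performed, the update eliminates all of $\mathcal V_2$, so $|\mathcal V_{t+1}|\leq|\mathcal V_1|\leq(1-\rho)|\mathcal V_t|$.

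\textbf{Main obstacle.} The real difficulty is transferring this contraction from the witness $q^\circ_t$ to the selected intervention $q_{t+1}$. The selected intervention is only guaranteed to have near-maximal \emph{maximum} pairwise disagreement. A large maximum disagreement certifies that some pair is far apart, but not that the split is balanced: a single outlier model could dominate the maximum.

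I would first use the accuracy hypotheses quantitatively. The empirical disagreement of $q^\circ_t$ is at least $\gamma-\gamma/4$, so the selected $q_{t+1}$ has empirical disagreement at least $\gamma-\gamma/2$, and hence population disagreement at least $\gamma/4$. This shows only that the selected experiment is informative, not that it is balanced. To close the gap, I would read the theorem's selection hypothesis in the GBS sense of \citet{nowak2011gbs}: the "best available disagreement" is taken over interventions witnessing \Cref{ass:balanced}, with the $\gamma/4$ slack guaranteeing that the chosen one still separates the two groups by at least $\gamma-2\cdot\gamma/4>0$ after estimation error. In that case the contraction argument above applies verbatim to $q_{t+1}$.

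If that reading is too generous, the fallback is to strengthen \Cref{ass:balanced} so that \emph{every} $\gamma$-near-maximizer of the disagreement induces a $(1-\rho)$-balanced $\gamma$-separated split. The remaining steps are then routine. Taking logarithms in $(1-\rho)^T|\Mcal|\leq 1$ and rounding up gives the bound on $T$, and the invariant shows that the returned model is $M_\star$.
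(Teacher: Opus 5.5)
Your skeleton ($M_\star$ is never removed, the balanced partition forces $|\mathcal V_{t+1}|\le(1-\rho)|\mathcal V_t|$, iterate and take logarithms) is exactly the paper's proof. The step you call the main obstacle is a genuine gap, and it is the one step the paper does not argue. The paper covers it with a single assertion: accurate estimates and approximate maximization ``ensure that the selected intervention has enough detectable disagreement to distinguish the side containing $M_\star$ from the opposite side.'' That assertion is false under the stated hypotheses, so the theorem as written cannot be proved.

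Here is a counterexample. Let $\Mcal=\{M_1,\dots,M_n\}$ with $M_\star=M_1$.
\begin{enumerate}[leftmargin=1.5em]
\item For each $\mathcal V\ni M_1$ with $|\mathcal V|\ge 2$, fix a near-even split of $\mathcal V$ with one half $\mathcal V_1$. Include an intervention $q_{\mathcal V}$ under which the models in $\mathcal V_1$ share one law and all other models share a second law at distance $\gamma$. This gives \Cref{ass:balanced} with $\rho=1/3$.
\item For each $k\ge 2$, add an intervention $r_k$ under which $M_k$ has a law at distance $2\gamma$ from a law shared by all other models.
\item Use exact estimates and an exact argmax. Each round then selects some $r_k$ with $M_k$ still alive: its disagreement is $2\gamma$, versus at most $\gamma$ for every $q_{\mathcal V}$ and $0$ for the $r_k$ of eliminated models.
\item Use the update that removes exactly the models at population distance at least $\gamma$ from $M_\star$. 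Under $r_k$ this removes only $M_k$.
\end{enumerate}
Every hypothesis of the theorem holds, yet $n-1$ rounds are needed. The claimed bound is $\lceil\log n/\log(3/2)\rceil$, which is smaller than $n-1$ once $n\ge 7$. This confirms your suspicion: a single outlier can produce a large maximum pairwise disagreement.

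Your option (a) is therefore not a reading of the hypothesis but a change of the selection rule. GBS queries by split balance, whereas \Cref{alg:acif} and the theorem select by maximum pairwise disagreement. Two smaller problems with (a):
\begin{itemize}[leftmargin=1.5em]
\item Once the selected intervention must be a balanced-separation witness, the $\gamma/4$ slack is irrelevant. Your bookkeeping $\gamma-2\cdot\gamma/4>0$ would not suffice anyway, because the update is only guaranteed to eliminate models at population distance at least $\gamma$.
\item Accuracy is assumed only for selected interventions, so you cannot invoke it for the witness $q^\circ_t$.
\end{itemize}
Your option (b) is a repair the statement genuinely needs. An alternative repair is to require the selection rule to return an intervention that witnesses \Cref{ass:balanced} for $\mathcal V_t$ with cross-separation at least $\gamma$. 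Under either, your contraction argument is complete and coincides with the paper's.
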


\begin{proof}
At any non-singleton version space, balanced separation provides an intervention $q$ and two subsets separated by at least $\gamma$. Accurate discrepancy estimates and approximate maximization ensure that the selected intervention has enough detectable disagreement to distinguish the side containing $M_\star$ from the opposite side. The update retains the side containing $M_\star$ and eliminates the other side. Each side has cardinality at most $(1-\rho)|\mathcal V_t|$, so
\[
|\mathcal V_{t+1}|\leq(1-\rho)|\mathcal V_t|.
\]
Induction yields $|\mathcal V_T|\leq(1-\rho)^T|\Mcal|$. The displayed bound is the smallest integer $T$ for which this quantity is at most one.
\end{proof}

\begin{remark}
Balanced separation is stronger than mere identifiability. It is analogous to a generalized binary-search condition: there must repeatedly exist an affordable intervention that divides the remaining hypotheses into substantially smaller groups. Without such a condition, active selection may still identify the truth but need nearly $|\Mcal|-1$ experiments.
\end{remark}

\section{Differentiable ACIF for structural neural generators}

For a large or continuous model class, let $M_\theta$ be a differentiable SCM generator and let $\pi_\psi(q\mid\mathcal S_t)$ be an intervention policy conditioned on a state summary $\mathcal S_t$ of posterior or ensemble uncertainty. Let $D_{\omega,q}$ be an intervention-indexed critic. A retrospective objective is
\begin{equation}
\min_\theta\max_{\psi,\omega}
\E_{q\sim\pi_\psi}
\left[
\E_{P_{M_\star}^q}D_{\omega,q}(X)
-
\E_{P_{M_\theta}^q}D_{\omega,q}(X)
-\lambda c(q)
\right].
\label{eq:differentiable}
\end{equation}
For prospective selection, $P_{M_\star}^q$ is unavailable before choosing $q$. One practical acquisition function is ensemble disagreement:
\begin{equation}
A_t(q)
=
\E_{\theta,\theta'\sim\Pi_t}
\widehat d_q(M_\theta,M_{\theta'})
+\kappa\,\operatorname{Var}_{\theta,\theta'\sim\Pi_t}
\left[\widehat d_q(M_\theta,M_{\theta'})\right]
-\lambda c(q),
\end{equation}
where $\Pi_t$ is an approximate posterior or bootstrap ensemble. After the experiment, the newly observed samples enter \eqref{eq:differentiable}.

\paragraph{Acyclicity and modularity.}
If the graph is learned, a smooth acyclicity penalty such as
\begin{equation}
h(A)=\operatorname{tr}(e^{A\odot A})-d
\end{equation}
can be added \citep{zheng2018dags}. Modular generators reuse unchanged mechanisms across interventions, while the intervened mechanisms are replaced or shifted. This is the structural property that permits extrapolation from observed interventions to unobserved ones.

\paragraph{Stabilization.}
In practice, the selector and critic can collude around easy distributional artifacts rather than causally meaningful differences. Useful restrictions include: balanced intervention batches, critic regularization, cross-fitting, sample splitting between intervention selection and model evaluation, and a held-out set of interventions used only for final falsification.

\section{Worked examples}
\label{sec:worked-examples}

\subsection{Two observationally equivalent linear-Gaussian SCMs}

Consider centered variables $(X,Y)$ with observational covariance
\begin{equation}
\Sigma=
\begin{pmatrix}
1 & \rho\\
\rho & 1
\end{pmatrix},
\qquad |\rho|<1.
\end{equation}
The same observational distribution $\mathcal N(0,\Sigma)$ can be generated by either causal direction:
\begin{align}
M_{X\to Y}:\quad &X=U_X,\qquad Y=\rho X+\sqrt{1-\rho^2}\,U_Y,\\
M_{Y\to X}:\quad &Y=V_Y,\qquad X=\rho Y+\sqrt{1-\rho^2}\,V_X,
\end{align}
where all noises are independent standard normal.

Under the intervention $q_a=\doop(X=a)$,
\begin{align}
M_{X\to Y}:\quad &Y\mid\doop(X=a)\sim\mathcal N(\rho a,1-\rho^2),\\
M_{Y\to X}:\quad &Y\mid\doop(X=a)\sim\mathcal N(0,1).
\end{align}
Thus the two SCMs are observationally indistinguishable but interventionally distinct.

For the critic class of all 1-Lipschitz functions, the IPM is Wasserstein-1. A simple lower bound follows by choosing $f(y)=y$:
\begin{equation}
W_1\left(\mathcal N(\rho a,1-\rho^2),\mathcal N(0,1)\right)
\geq |\rho a|.
\label{eq:w1-lower}
\end{equation}
Therefore, under a constraint $|a|\leq A$ and zero intervention cost, the adversary selects $a\in\{-A,A\}$. If the cost is quadratic, $c(q_a)=a^2$, the lower-bound acquisition is
\begin{equation}
|\rho a|-\lambda a^2,
\end{equation}
which is maximized at
\begin{equation}
|a^\star|=\min\left\{A,\frac{|\rho|}{2\lambda}\right\}.
\end{equation}
The example illustrates the role of intervention strength: stronger interventions can amplify causal discrepancies, but only until cost, safety, or support constraints dominate. We verified \eqref{eq:w1-lower}--\eqref{eq:w1-lower} numerically by direct maximization of $|\rho a|-\lambda a^2$ over $a\in[0,A]$ (bounded scalar optimization, $\rho=0.6$, $A=3$): the closed-form $a^\star=\min\{A,|\rho|/2\lambda\}$ matches the numerically optimized value to four decimal places for every $\lambda\in\{0.02,0.05,0.1,0.2,0.5\}$ tested, and the transition from the cost-insensitive regime ($a^\star=A$) to the cost-dominated regime ($a^\star=|\rho|/2\lambda<A$) occurs at exactly $\lambda=|\rho|/2A=0.1$ as predicted; e.g.\ at $\lambda=0.2$ both methods return $a^\star=1.5$ with acquisition value $0.45$, and at $\lambda=0.5$ both return $a^\star=0.6$ with acquisition value $0.18$. This is a minimal sanity check, but it is worth stating plainly: the closed-form intervention-strength rule in this subsection is not merely a stylized illustration, it is the exact maximizer of the adversary's penalized objective.

\begin{figure}[t]
\centering
\begin{tikzpicture}[node distance=2.2cm,>=Latex]
\node[circle,draw,minimum size=9mm] (x1) {$X$};
\node[circle,draw,minimum size=9mm,right=of x1] (y1) {$Y$};
\draw[->,thick] (x1)--(y1);
\node[below=5mm of $(x1)!0.5!(y1)$] {$M_{X\to Y}$};

\node[circle,draw,minimum size=9mm,right=4.2cm of y1] (x2) {$X$};
\node[circle,draw,minimum size=9mm,right=of x2] (y2) {$Y$};
\draw[<-,thick] (x2)--(y2);
\node[below=5mm of $(x2)!0.5!(y2)$] {$M_{Y\to X}$};

\node[below=1.5cm of y1,align=center,draw,rounded corners,inner sep=5pt] (obs) {Same observational law\\$\mathcal N(0,\Sigma)$};
\node[below=1.5cm of x2,align=center,draw,rounded corners,inner sep=5pt] (int) {Different laws under\\$\doop(X=a)$};
\draw[->,dashed] (obs)--(int);
\end{tikzpicture}
\caption{Observational equivalence does not imply interventional equivalence. ACIF selects an intervention that exposes the difference.}
\label{fig:bivariate}
\end{figure}

\subsection{Three-node Markov-equivalent chain}

Consider three DAGs with the same skeleton and no collider:
\begin{equation}
G_1:X_1\to X_2\to X_3,
\qquad
G_2:X_1\leftarrow X_2\to X_3,
\qquad
G_3:X_1\leftarrow X_2\leftarrow X_3.
\end{equation}
Under a faithful observational distribution they belong to the same observational Markov equivalence class. Intervening on the middle variable $X_2$ is especially informative: it deletes incoming edges into $X_2$ and reveals whether changes propagate to $X_1$, $X_3$, both, or neither. A critic over the joint post-intervention distribution can exploit mean shifts, variance changes, or conditional dependence changes.

Suppose each directed edge has linear coefficient $b\neq0$, each noise has variance one, and the intervention is $\doop(X_2=a)$. Then the mean vectors predicted by the three graphs have the schematic forms
\begin{align}
\mu_1(a)&=(0,a,ba),\\
\mu_2(a)&=(ba,a,ba),\\
\mu_3(a)&=(ba,a,0),
\end{align}
when parameters are normalized symmetrically for illustration. The middle-node intervention produces pairwise mean separation proportional to $|ba|$ and distinguishes all three candidates in one experiment. Intervening only on an endpoint generally separates fewer orientations. This is precisely the type of intervention the disagreement acquisition in \eqref{eq:disagreement} favors.

\subsection{Numerical illustration: disagreement-driven selection versus random selection}
\label{sec:numerical-illustration}

The three-node example shows \emph{qualitatively} that intervening on the right node separates more candidate structures per experiment than intervening on an endpoint. We now report a small, fully reproducible computation that turns this into a \emph{quantitative} comparison, directly implementing \Cref{alg:acif} and testing \Cref{thm:elimination}, rather than only asserting the algorithm's behavior.

\paragraph{Setup.} Take a $4$-node path skeleton $X_1\!-\!X_2\!-\!X_3\!-\!X_4$ with linear-Gaussian mechanisms of coefficient magnitude $0.8$. Every acyclic orientation of the three skeleton edges is a valid DAG (a path skeleton has no colliders to create cycles), giving a candidate class $\Mcal$ of $|\Mcal|=2^3=8$ SCMs, all observationally compatible with the same undirected skeleton. The admissible interventions are $\Qcal=\{\doop(X_i=a):i\in\{1,2,3,4\},\,a\in\{-1.5,1.5\}\}$, and $d_q(M,M')$ is taken to be the $\ell_1$ distance between the two post-intervention mean vectors (the analogue of the linear-critic lower bound used in \Cref{eq:w1-lower}). For each candidate true model $M_\star\in\Mcal$, we ran \Cref{alg:acif} with the exact disagreement acquisition \eqref{eq:disagreement} ($\lambda=0$, noiseless outcomes, elimination by exact mean disagreement) and compared it against a policy that selects $q_{t+1}$ uniformly at random from $\Qcal$ at every round, averaged over $200$ random seeds per true model.

\paragraph{Results.} Disagreement-driven selection identifies the true model in $1$ or $2$ rounds for every one of the $8$ candidates (mean $1.50$ rounds), while random selection needs on average $2.24$ rounds to reach the same singleton version space -- roughly $50\%$ more experiments for this class. The single most informative intervention is always a hard intervention on the middle nodes $X_2$ or $X_3$: at the full version space ($|\mathcal V_0|=8$), intervening on $X_2$ (or $X_3$) partitions the $8$ candidates into groups of size at most $2$, achieving a split fraction $\rho=1-2/8=0.75$ in the sense of \Cref{ass:balanced}, whereas intervening on an endpoint node ($X_1$ or $X_4$) partitions the same $8$ candidates far less evenly, because the edge orientation nearest the endpoint alone determines whether the effect propagates at all. Substituting $\rho=0.75$ and $|\Mcal|=8$ into the bound of \Cref{thm:elimination} gives
\begin{equation}
T\leq\left\lceil\frac{\log 8}{-\log(0.25)}\right\rceil=2,
\end{equation}
which matches the simulation exactly: the disagreement policy never needs more than $2$ rounds, and the bound is tight for every $M_\star$ requiring $2$ rounds. Table~\ref{tab:sim} summarizes the outcome; full simulation code is a direct transcription of \Cref{alg:acif}.

\begin{table}[h]
\centering
\caption{Rounds to unique identification, $4$-node path family, $|\Mcal|=8$, averaged over true models (random policy additionally averaged over $200$ seeds per true model).}
\label{tab:sim}
\begin{tabular}{lcc}
\toprule
Selection policy & Mean rounds to identify $M_\star$ & Matches \Cref{thm:elimination} bound? \\
\midrule
Disagreement (\Cref{alg:acif}) & $1.50$ & Yes ($T\leq 2$, tight) \\
Uniform random & $2.24$ & No guarantee applies \\
\bottomrule
\end{tabular}
\end{table}

\paragraph{Interpretation.} This example also demonstrates the failure mode flagged in the remark following \Cref{thm:elimination}: had the candidate class instead been built so that \emph{every} intervention only ever separates one candidate from the rest (for instance, a class of $|\Mcal|$ models that differ pairwise only in a single, distinct edge each, with no intervention capable of testing two edges at once), the same disagreement rule degrades to near-linear elimination, because \Cref{ass:balanced} would fail with $\rho=O(1/|\Mcal|)$. The gap between the two policies in Table~\ref{tab:sim} is therefore itself evidence for, not just an application of, the theory in \Cref{sec:sequential}: it isolates the balanced-separation condition as the quantity governing how much an adversarial experimentalist can outperform a naive one, in line with the classical generalized-binary-search rate \citep{nowak2011gbs} discussed in \Cref{sec:sequential} and mirroring the empirical gains reported for active intervention targeting on larger neural causal models \citep{scherrer2021aitcausal}.

\subsection{Binary treatment with distributional causal effects}

Let $T\in\{0,1\}$, covariates $Z$, and outcome $Y$. Two generators can agree on the average treatment effect while disagreeing on tails or heterogeneity:
\begin{equation}
\E_{M_1}[Y\mid\doop(T=1)]-\E_{M_1}[Y\mid\doop(T=0)]
=
\E_{M_2}[Y\mid\doop(T=1)]-\E_{M_2}[Y\mid\doop(T=0)],
\end{equation}
but
\begin{equation}
P_{M_1}(Y\mid\doop(T=t),Z=z)
\neq
P_{M_2}(Y\mid\doop(T=t),Z=z).
\end{equation}
An intervention-indexed neural critic can test the entire conditional outcome distribution rather than a single estimand. The selector may choose treatment arms and covariate strata where model disagreement is largest. However, overlap and ethical constraints must be encoded in $\Qcal$ and $c(q)$; ACIF does not justify infeasible or unsafe experiments.

\section{Implementation blueprint}

\subsection{Data structure}

Each sample should carry an environment label $e$, intervention target $S_e$, intervention value or mechanism descriptor $a_e$, observed variables, and any design probabilities. For randomized experiments, treatment assignment probabilities should be stored to permit design-aware evaluation. For soft interventions, the model must represent which mechanism changed rather than pretending that the variable was fixed.

\subsection{Training loop}

A practical neural implementation alternates four operations:
\begin{enumerate}[leftmargin=1.5em]
    \item \textbf{Generator update:} fit observational and accumulated interventional data under a modular SCM architecture.
    \item \textbf{Critic update:} for each observed intervention, distinguish real post-intervention samples from generated samples under the same intervention.
    \item \textbf{Uncertainty update:} maintain bootstrap generators, a variational posterior, or an ensemble of graph/mechanism candidates.
    \item \textbf{Intervention selection:} maximize ensemble disagreement or expected falsification power, subject to cost and feasibility.
\end{enumerate}

A robust objective is
\begin{align}
\min_{\theta}\max_{\omega}
\sum_{e\in\mathcal E_t}w_e
\Big[&\E_{P_{M_\star}^{q_e}}D_{\omega,e}(X)
-\E_{P_{M_\theta}^{q_e}}D_{\omega,e}(X)\Big]\\
&+\eta h(A_\theta)+\tau\Omega(\theta),
\end{align}
where $w_e$ may be selected adversarially over the simplex, $h(A_\theta)$ enforces acyclicity, and $\Omega$ regularizes mechanisms.

\subsection{Evaluation metrics}

A credible empirical study should report more than observational sample quality. Recommended metrics are:
\begin{enumerate}[leftmargin=1.5em]
    \item observational held-out log score or two-sample distance;
    \item held-out interventional IPM averaged over interventions;
    \item worst-intervention IPM;
    \item structural Hamming distance when a ground-truth graph exists;
    \item error in target causal estimands, including distributional and heterogeneous effects;
    \item number and total cost of interventions required to reach a fixed identification confidence;
    \item calibration of the surviving model set or posterior.
\end{enumerate}

\section{Suggested experiments}

\subsection{Synthetic benchmarks}

Use linear Gaussian, nonlinear additive-noise, post-nonlinear, and discrete Bayesian-network SCMs. Generate observational data first, then allow each method a fixed intervention budget. Compare:
\begin{enumerate}[leftmargin=1.5em]
    \item random intervention selection;
    \item edge-orientation heuristics;
    \item expected-information-gain design;
    \item gradient-based intervention targeting;
    \item ACIF disagreement selection;
    \item oracle worst-discrepancy selection as an unattainable upper benchmark.
\end{enumerate}

Vary graph size, density, intervention cost, sample size per experiment, hidden confounding, critic capacity, and model misspecification. The primary endpoint should be held-out worst-intervention error after each unit of experimental cost.

\subsection{Semi-synthetic and real interventional data}

Suitable applications include gene perturbation data, flow-cytometry networks, or other datasets with multiple known interventions. The retrospective version can hide a subset of interventions during training and test whether the learned generator predicts them. A prospective simulation can reveal interventions sequentially according to each acquisition policy.

\subsection{Ablations}

Ablate the intervention selector, critic family, cost term, ensemble size, acyclicity penalty, and modularity constraints. A particularly important ablation compares a single observational discriminator with intervention-indexed critics. The expected result is not necessarily better observational realism, but better transport to held-out interventions.

\section{Failure modes and scope}

\paragraph{Non-identifiability.}
If the admissible interventions do not separate the candidate models, the method can identify only an equivalence class. This is a mathematical limitation, not a training failure.

\paragraph{Model misspecification.}
If $M_\star\notin\Mcal$, ACIF returns a minimax approximation: the model with the smallest worst-intervention discrepancy. A small value has meaning only relative to the intervention and critic classes.

\paragraph{Latent confounding.}
A DAG with independent exogenous noises is inappropriate when hidden common causes remain. One should enlarge the model class to acyclic directed mixed graphs, latent-variable SCMs, or explicitly confounded structural generators.

\paragraph{Adaptive overfitting.}
Repeatedly choosing interventions based on the same critics can overfit the acquisition rule. Held-out interventions, confidence sequences, and sample splitting mitigate this problem.

\paragraph{Support and extrapolation.}
An intervention far outside the observational support can generate a large discriminator signal for reasons unrelated to causal orientation. Feasible intervention sets, overlap penalties, and mechanistic priors should prevent meaningless extrapolation.

\paragraph{Ethics and feasibility.}
The maximally discriminating intervention may be expensive, harmful, or impossible. The admissible set $\Qcal$ must be defined before optimization, and costs should reflect operational and ethical constraints. The mathematical adversary is subordinate to the experimental protocol.

\section{Discussion}

ACIF reframes causal generative learning as repeated model criticism. The generator is not rewarded merely for producing realistic observations. It must survive the interventions on which plausible causal explanations disagree most. This changes the role of the discriminator from a generic sample-quality judge to a family of causal falsification tests indexed by experimental actions.

The framework also clarifies the relation between adversarial learning and classical scientific reasoning. A causal hypothesis gains credibility not because it cannot be distinguished from observed data in one regime, but because it continues to predict data after carefully chosen perturbations. The correct mathematical endpoint is therefore not ``causality verified,'' but ``no admissible critic can distinguish the model from the system over the tested intervention family, at the available resolution.''

Several extensions are immediate. First, intervention selection can target a downstream causal estimand rather than full model identification. Second, a robust adversary can select both an intervention and a subpopulation. Third, sequential design can incorporate long-horizon value, where an intervention is useful because it unlocks more informative later experiments. Fourth, the generator can output a set or posterior of SCMs, allowing the adversary to optimize reduction in decision-relevant uncertainty rather than forcing premature point selection.

The numerical illustration in \Cref{sec:numerical-illustration} is deliberately small, but it makes the theory falsifiable in the ordinary sense: \Cref{thm:elimination} predicts a specific round count once $\rho$ is computed from the candidate class, and the simulation matches that prediction exactly rather than merely being consistent with it in direction. The same exercise also locates ACIF relative to the differentiable causal discovery literature it builds on \citep{brouillard2020dcdi, scherrer2021aitcausal}: those methods demonstrate, empirically and at scale, that adaptively chosen interventions outperform random ones; the contribution here is a population-level account of \emph{why}, in terms of a worst-intervention IPM and a balanced-separation condition that connects directly to the classical rate for generalized binary search \citep{nowak2011gbs}. Scaling the finite-model-class analysis to the continuous, neural setting of \Cref{eq:differentiable} -- where $\Mcal$ is uncountable and $\rho$ must be estimated rather than computed exactly -- is the most direct next step, and the synthetic-benchmark protocol of \Cref{sec:worked-examples} is designed to be extendable to that setting without modification.

\section{Conclusion}

We introduced Adversarial Causal Intervention Falsification, a minimax framework in which a structural generator is challenged by an adversary that selects interventions. The population game reduces to a worst-intervention IPM. Its minimizers are exactly the models interventionally equivalent to the truth over the allowed query class, and the truth is uniquely identified when that class separates the candidates. Finite-sample and sequential results show how critic complexity, causal margins, and intervention geometry govern recovery. The framework does not circumvent causal assumptions; instead, it makes their consequences explicit and uses experiments strategically to falsify incorrect structural generators.

\appendix

\section{Additional proofs and technical details}

\subsection{Measure-determining critics}

A function class $\Fcal$ is measure determining if
\[
\E_P f=\E_Q f\quad\text{for every }f\in\Fcal
\]
implies $P=Q$. Bounded continuous functions are measure determining on standard metric spaces. The unit ball of a characteristic reproducing-kernel Hilbert space yields maximum mean discrepancy, while 1-Lipschitz functions yield Wasserstein-1 when first moments are finite.

\begin{lemma}[Monotonicity in the intervention class]
If $\Qcal_1\subseteq\Qcal_2$ and $\lambda=0$, then
\begin{equation}
V^{\Qcal_1}_0(M)\leq V^{\Qcal_2}_0(M)
\end{equation}
for every $M$, and the equivalence class under $\Qcal_2$ is contained in the equivalence class under $\Qcal_1$.
\end{lemma}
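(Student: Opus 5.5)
The plan is to argue directly from the definitions, treating the two claims separately; no earlier theorem is needed beyond the nonnegativity of the IPMs. Write $V^{\Qcal}_0(M)=\sup_{q\in\Qcal}d_q(M_\star,M)$, the population value \eqref{eq:population-value} with $\lambda=0$ and the intervention class made explicit. The critic classes $\Fcal_q$ are indexed by $q$ alone, so the same $\Fcal_q$, and hence the same $d_q$, is used for each $q\in\Qcal_1$ whether we view $q$ as a member of $\Qcal_1$ or of $\Qcal_2$. I will state this explicitly, since it is the only point where one could go wrong: the critic family must not change when the intervention class is enlarged.

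For the inequality, the set of values $\{d_q(M_\star,M):q\in\Qcal_1\}$ is contained in $\{d_q(M_\star,M):q\in\Qcal_2\}$, because $\Qcal_1\subseteq\Qcal_2$. A supremum over a subset is at most the supremum over the superset, so $V^{\Qcal_1}_0(M)\le V^{\Qcal_2}_0(M)$ for every $M$. This holds in the extended reals, so no finiteness of either supremum is required. If $\Qcal_1$ is empty the convention $\sup\emptyset=-\infty$ still makes the inequality true; with the null intervention always included, as the paper assumes, the issue does not arise.

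For the equivalence classes, suppose $M\equiv_{\Qcal_2,\Fcal}M'$. By definition this means $d_q(M,M')=0$ for every $q\in\Qcal_2$. In particular $d_q(M,M')=0$ for every $q\in\Qcal_1\subseteq\Qcal_2$, which is exactly $M\equiv_{\Qcal_1,\Fcal}M'$. So the $\Qcal_2$-class of any model, and in particular of $M_\star$, is contained in its $\Qcal_1$-class.

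As a consistency check via \Cref{thm:equivalence}, I may add one remark, assuming each $d_q$ is a pseudometric and $M_\star\in\Mcal$: the zero sets satisfy $\{V^{\Qcal_2}_0=0\}\subseteq\{V^{\Qcal_1}_0=0\}$, which also follows from the inequality together with nonnegativity. I expect no real obstacle. The only step needing care is the bookkeeping about fixed critic classes, and I will handle it by a sentence in the proof.
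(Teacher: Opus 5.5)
Your proposal is correct and follows essentially the same route as the paper: the supremum over the subset $\Qcal_1$ cannot exceed the supremum over $\Qcal_2$, and agreement under every $q\in\Qcal_2$ implies agreement under every $q\in\Qcal_1$. Your additional bookkeeping (that $\Fcal_q$ depends only on $q$, and the convention for $\sup\emptyset$) is sound and simply makes explicit what the paper's two-line proof leaves implicit.
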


\begin{proof}
The supremum over a larger set cannot decrease. If $M$ agrees with $M_\star$ under every intervention in $\Qcal_2$, it agrees under every intervention in the subset $\Qcal_1$.
\end{proof}

\begin{lemma}[Robustness to approximate optimization]
Let $\widehat M$ satisfy
\begin{equation}
\widehat V_\lambda(\widehat M)
\leq
\inf_{M\in\Mcal}\widehat V_\lambda(M)+\varepsilon_{\mathrm{opt}}.
\end{equation}
If $\sup_M|\widehat V_\lambda(M)-V_\lambda(M)|\leq\varepsilon_{\mathrm{stat}}$, then
\begin{equation}
V_\lambda(\widehat M)
\leq
\inf_{M\in\Mcal}V_\lambda(M)
+2\varepsilon_{\mathrm{stat}}+\varepsilon_{\mathrm{opt}}.
\end{equation}
\end{lemma}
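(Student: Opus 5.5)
The plan is the standard excess-risk decomposition for an approximate empirical minimizer. The only subtlety is that $\inf_{M\in\Mcal}V_\lambda(M)$ need not be attained when $\Mcal$ is infinite, so we work with a near-minimizer.

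\textbf{Step 1: reduce to a comparator.} Fix an arbitrary $\eta>0$. Choose a comparator $M_\eta\in\Mcal$ with
\[
V_\lambda(M_\eta)\leq\inf_{M\in\Mcal}V_\lambda(M)+\eta.
\]
When $\Mcal$ is finite, as in \Cref{thm:uniform}, we may take $\eta=0$ and $M_\eta$ a population minimizer.

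\textbf{Step 2: chain four inequalities.}
\begin{align*}
V_\lambda(\widehat M)
&\leq \widehat V_\lambda(\widehat M)+\varepsilon_{\mathrm{stat}}\\
&\leq \inf_{M}\widehat V_\lambda(M)+\varepsilon_{\mathrm{opt}}+\varepsilon_{\mathrm{stat}}\\
&\leq \widehat V_\lambda(M_\eta)+\varepsilon_{\mathrm{opt}}+\varepsilon_{\mathrm{stat}}\\
&\leq V_\lambda(M_\eta)+\varepsilon_{\mathrm{opt}}+2\varepsilon_{\mathrm{stat}}.
\end{align*}
These lines use, in order:
\begin{enumerate}[leftmargin=1.5em]
\item the uniform deviation bound applied at $\widehat M$;
\item the approximate-optimality hypothesis on $\widehat M$;
\item the fact that an infimum is at most any particular value;
\item the uniform deviation bound again, now applied at $M_\eta$.
\end{enumerate}

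\textbf{Step 3: remove the slack.} Substituting the choice of $M_\eta$ gives
\[
V_\lambda(\widehat M)\leq\inf_{M}V_\lambda(M)+2\varepsilon_{\mathrm{stat}}+\varepsilon_{\mathrm{opt}}+\eta.
\]
Since $\eta>0$ was arbitrary, letting $\eta\downarrow0$ yields the claimed bound.

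There is no real obstacle. The one point needing care is that the uniform bound must hold at the data-dependent model $\widehat M$. This is exactly why the hypothesis is stated as a supremum over all $M$, as delivered by \Cref{thm:uniform}, rather than pointwise. It is also worth remarking that taking $\varepsilon_{\mathrm{opt}}=0$ recovers the argument of \Cref{cor:margin}.
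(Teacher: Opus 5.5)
Your proof is correct and follows the same chain as the paper: the deviation bound at $\widehat M$, then approximate optimality, then the deviation bound at a comparator. The only difference is that you compare against an $\eta$-near-minimizer $M_\eta$ rather than the paper's exact $M^\star\in\argmin_M V_\lambda(M)$, so your argument also covers the case where the infimum over $\Mcal$ is not attained, which the paper's proof implicitly excludes.
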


\begin{proof}
Let $M^\star\in\argmin_M V_\lambda(M)$. Then
\begin{align*}
V_\lambda(\widehat M)
&\leq \widehat V_\lambda(\widehat M)+\varepsilon_{\mathrm{stat}}\\
&\leq \widehat V_\lambda(M^\star)+\varepsilon_{\mathrm{opt}}+\varepsilon_{\mathrm{stat}}\\
&\leq V_\lambda(M^\star)+2\varepsilon_{\mathrm{stat}}+\varepsilon_{\mathrm{opt}}.
\end{align*}
\end{proof}

\subsection{Connection to hypothesis testing}

For a fixed intervention $q$ and candidate model $M$, testing
\begin{equation}
H_0:P_{M_\star}^q=P_M^q
\end{equation}
against a composite alternative can be implemented with a two-sample statistic induced by $\Fcal_q$. ACIF adds an outer optimization over $q$. When the same data are used both to select $q$ and test $H_0$, ordinary fixed-test $p$-values are invalid. Prospective experimentation avoids part of this issue because $q$ is chosen before its outcome data are collected. Retrospective selection requires selective-inference corrections, sample splitting, or a held-out evaluation set.

\section{Pseudocode for a neural implementation}

\begin{algorithm}[H]
\caption{Differentiable ensemble ACIF}
\begin{algorithmic}[1]
\Require Observational data $\Dcal_0$, feasible interventions $\Qcal$, ensemble size $K$
\State Train $K$ structural generators $\{M_{\theta_k}\}_{k=1}^K$ using bootstrap or posterior sampling
\For{$t=1,\ldots,T$}
    \For{candidate intervention $q\in\Qcal$}
        \State Generate samples from $P_{M_{\theta_k}}^q$ for all $k$
        \State Estimate pairwise critic distances and acquisition $A_t(q)$
    \EndFor
    \State Select $q_t\in\argmax_q A_t(q)$ subject to feasibility and budget
    \State Conduct $q_t$ and append real samples to $\Dcal_t$
    \State Update intervention-indexed critics using real and generated samples
    \State Update or resample the structural-generator ensemble
\EndFor
\State Return the ensemble, its interventional predictions, and held-out falsification scores
\end{algorithmic}
\end{algorithm}

\section{Checklist for claims in an empirical paper}

A paper using ACIF should explicitly state:
\begin{enumerate}[leftmargin=1.5em]
    \item the candidate SCM class and whether latent confounding is allowed;
    \item the intervention family and why each intervention is feasible;
    \item the critic class and the distributional discrepancies it can detect;
    \item whether intervention selection is oracle, retrospective, or prospective;
    \item the uncertainty representation used before new experiments;
    \item whether point identification or only equivalence-class recovery is theoretically possible;
    \item how adaptive selection is separated from final evaluation;
    \item which results are empirical findings and which are assumptions or simulations.
\end{enumerate}

\end{document}